\pgfplotsset{compat=1.17}
\tikzset{every mark/.append style={scale=1.6, solid}, font=\small}
\pgfplotsset{
    width=1\textwidth,
    height=5.5cm,
    legend style={
        font=\ssmall ,  
        inner xsep=1pt,
        inner ysep=1pt,
        nodes={inner sep=1pt}},
    legend cell align=left,
    every axis/.append style={line width=.5pt},
 	every axis plot/.append style={line width=1.5pt},
 	every axis y label/.append style={yshift=-4pt}
}
\begin{document}

\title{Unrolling Dynamic Programming via Graph Filters}

\author{Sergio~Rozada, Samuel Rey, Gonzalo Mateos, and~Antonio~G.~Marques

\thanks{S. Rozada, S. Rey, and A. G. Marques are with the Dep.
of Signal Theory, King Juan Carlos University, Madrid, Spain, \{sergio.rozada, samuel.rey.escudero, antonio.garcia.marques\}@urjc.es. G. Mateos is with the Dep. of ECE, University of Rochester, USA, gmateosb@ur.rochester.edu.}%

\thanks{Work partially funded by the Spanish AEI (10.13039/501100011033) grant PID2022-136887NB-I00, and the Community of Madrid via the Ellis Madrid Unit and grants URJC/CAM F861 and F1180 and TEC-2024/COM-89. Minor edits of this document were made with the assistance of ChatGPT.} }

\maketitle


\begin{abstract}

Dynamic programming (DP) is a fundamental tool used across many engineering fields. The main goal of DP is to solve Bellman’s optimality equations for a given Markov decision process (MDP). Standard methods like policy iteration exploit the fixed-point nature of these equations to solve them iteratively. However, these algorithms can be computationally expensive when the state-action space is large or when the problem involves long-term dependencies. Here we propose a new approach that unrolls and truncates 
policy iterations into a 
learnable 
parametric model dubbed BellNet, which we train to minimize the so-termed Bellman error from random value function initializations. Viewing the transition probability matrix of the MDP as the adjacency of a weighted directed graph, we draw insights from graph signal processing to 
interpret (and compactly re-parameterize) BellNet as a cascade of nonlinear graph filters. This fresh look
facilitates a concise, transferable, and unifying representation of policy and value iteration, with an explicit handle on complexity during inference. 
Preliminary experiments conducted in a grid-like environment demonstrate that BellNet can effectively approximate optimal policies in a fraction of the iterations required by classical methods.
\end{abstract}

\begin{IEEEkeywords}
Algorithm Unrolling, Dynamic Programming, Graph Signal Processing, Graph Filter, Policy Iteration.
\end{IEEEkeywords}

\section{Introduction}

Dynamic programming (DP), recognized for its effectiveness in numerous engineering applications \cite{denardo2012dynamic}, is frequently modeled as a Markov decision process (MDP) \cite{puterman2014markov}. A central challenge in DP involves solving Bellman's equations (BEQs) to determine value functions (VFs), which represent cumulative long-term rewards. Since BEQs constitute fixed-point equations, DP commonly relies on iterative algorithms \cite{bertsekas2012dynamic, puterman2014markov}, wherein state transitions naturally induce a directed (di)graph structure. Despite their  effectiveness, these iterative methods face significant computational hurdles.
The number of required iterations until convergence grows rapidly with the size of the state-action space, and even more so in long-horizon problems.

To address these challenges, this work leverages algorithm unrolling \cite{gregor2010learning,chen2021graph} and graph signal processing (GSP) \cite{ortega2018graph,leus2023graph} to develop novel \emph{learnable} neural architectures 
that reduce the number of DP iterations. 
Unrolling techniques combine the 
interpretability of model-based algorithms with the flexibility of data-driven methods \cite{gregor2010learning, monga2021algorithm}. In unrolling, iterative algorithms are 
truncated to a finite sequence of update steps, eliminating conventional iterative loops. This sequential structure can then be mapped into a parametric model where iterations become layers, enabling particular elements of each update step to be learned directly from data rather than being prescribed~\cite{hadou2024robust}.

The iterative nature of DP  makes it particularly well-suited for unrolling strategies.
In our approach, we unroll the steps of policy and value iteration into a deep architecture termed BellNet, enabling a compact, data-driven alternative to traditional DP solvers. To design and customize BellNet, 
we draw on tools from GSP.
Specifically, we observe that each step in policy iteration can be formulated as a polynomial of the transition probability matrix, followed by a nonlinearity.
By interpreting the transition matrix as the adjacency of a weighted digraph, we identify this matrix polynomial as a graph filter~\cite{isufi2024graph}.
All in all, we find that unrolled policy iteration boils down to a cascade of nonlinear graph filters.
This GSP crossover enables the design of encompassing unrolled architectures that (i) require fewer (inference) steps to approximate policy iteration; and (ii) transfer across similar environments.
The summary of our contributions are
\begin{itemize}
    \item[\textbf{C1}] We introduce BellNet, an unrolled version of policy iteration structured as a cascade of nonlinear graph filters;
    \item[\textbf{C2}] We put forth a learning problem, where the filter coefficients are trained to minimize the so-termed Bellman error from random VF initializations; and
    \item[\textbf{C3}] We experimentally show in a grid-world setting that the learned BellNet model converges in significantly fewer iterations and generalizes well to similar environments.
\end{itemize}

\noindent \textbf{Prior work.} In reinforcement learning (RL), unrolling has been used in image-based settings~\cite{tamar2016value}, and to learn the MDP topology by interpreting the transition matrix as a graph~\cite{niu2018generalized, deac2020graph}. Unlike our work, existing approaches (a) focus on value iteration, a special case of the more general policy iteration framework; (b) address RL rather than DP, thus they estimate transition probabilities instead of exploiting the graph structure to design the unrolled architecture; and (c) target single tasks instead of enabling generalization across MDPs. 


Prior RL works have used GSP tools to improve algorithmic efficiency. For instance,~\cite{liu2019solving} postulates the VFs lie in a low-dimensional subspace induced by the state transition digraph; \cite{liu2019policy} estimates the optimal policy on a subset of states and extends it via graph interpolation; and~\cite{levorato2012reduced} applies graph reduction to simplify the decision process.
While effective, these methods are task-specific. In contrast, BellNet is task-agnostic and applicable across different MDPs.


Finally, a growing body of work in GSP investigates the properties of graph filters, e.g., permutation equivariance, stability, or transferability~\cite{gama2020stability,ruiz2021graph,levie2021transferability,cervino2023learning,rey2025redesigning,wang2025manifold}.
We empirically show that BellNet inherits some of these desirable properties, although a deeper theoretical analysis is left for future work.

\section{Preliminaries: Fundamentals of DP and GSP}

\noindent \textbf{DP.} In DP, we consider an MDP defined by the tuple $(\ccalS, \ccalA, \bbP, \bbR)$, where $\ccalS$ and $\ccalA$ are discrete state and action spaces, $\bbP \in [0,1]^{|\ccalS||\ccalA| \times |\ccalS|}$ is a known transition probability matrix whose rows, indexed by state-action pairs $(s,a)$, define distributions over next states $s’$, and $\bbR \in \reals^{|\ccalS| \times |\ccalA|}$ contains the rewards. Solving the MDP amounts to finding a policy $\pi:\ccalS\mapsto [0,1]^{|\ccalA|}$ that maximizes the VFs, defined as expected cumulative rewards.
A policy maps each state $s$ to a distribution over actions $a$, and the VF under $\pi$ is given by $Q^\pi(s, a) = \mathbb{E}_\pi \left[ \sum_{t=0}^\infty \gamma^t r_t \mid s_0 = s, a_0 = a \right]$, where $\gamma \in [0,1)$ is a discount factor and the instantaneous reward $r_t$ is the entry of $\bbR$ indexed by the state-action pair at time $t$. 
We arrange policy probabilities in the matrix $\bbPi \in [0,1]^{|\ccalS| \times |\ccalA|}$ and the VFs in $\bbQ_\pi \in \reals^{|\ccalS| \times |\ccalA|}$.
For convenience, we henceforth use the vectorizations $\bbr = \text{vec}(\bbR)$ and $\bbq_\pi = \text{vec}(\bbQ_\pi)$.

The BEQs characterize the VFs $\bbq_\pi$ for a fixed policy $\pi$~\cite{bellman1966dynamic}. Denoting $\bbP_\pi = \bbP (\bbI \odot \bbPi^\top )^\top$, where $\odot$ is the Khatri-Rao product and $\bbI$ the identity matrix, we have that
\begin{equation}\label{eq:bellman_equation}
    \bbq_\pi = \bbr + \gamma \bbP_\pi \bbq_\pi.
\end{equation}
This fixed-point linear system of equations can be solved iteratively. 
Iterating until convergence is referred to as \emph{policy evaluation} in DP parlance.
Greedy maximization of $\bbQ_\pi$ with respect to actions (columns) produces a new policy $\bbPi'$, i.e.,
\begin{equation}
    \label{eq::policy_improvement}
    \Pi'_{i, j} =
    \begin{cases}
        1 & \text{if } j = \arg\max_k Q_{i k}, \\
        0 & \text{otherwise.}
\end{cases}
\end{equation}
This step, known as \emph{policy improvement}, produces a policy $\bbPi'$ that is guaranteed to outperform $\bbPi$ in terms of the attained VFs~\cite{bertsekas2012dynamic}.
Crucially, if $\bbPi' = \bbPi$, then $\bbPi=\bbPi^\star$ is optimal, i.e., attains the maximum VFs $\bbQ_\pi=\bbQ^\star$ for all state-action pairs. This process underpins \emph{policy iteration}, an iterative method that alternates between policy evaluation and policy improvement to compute the optimal VFs and policy.

Interestingly, for the optimal VFs $\bbQ^\star$, it also holds that
\begin{equation}
    \label{eq::bellman_optimality}
    \bbq^\star = \bbr + \gamma \bbP \bbv^\star \quad \text{with} \quad v^\star_i = \max_k Q^\star_{i k}.
\end{equation}
This defines a nonlinear fixed-point system that can be solved iteratively through a procedure known as \emph{value iteration}. 
Value iteration is equivalent to performing one step of the policy evaluation iteration followed by policy improvement~\cite{bellman1966dynamic}.

\vspace{1mm}
\noindent
\textbf{GSP.}
A graph $\ccalG = (\ccalV, \ccalE)$ is defined by a set of $N$ nodes $\ccalV$ and a set of edges $\ccalE \subseteq \ccalV \times \ccalV$.
The connectivity of $\ccalG$ is captured by the sparse adjacency matrix $\bbA \in \mathbb{R}^{N \times N}$, where $A_{ij} \neq 0$ if and only if $(i,j) \in \ccalE$, and the entry $A_{ij}$ denotes the weight of the edge from node $i$ to node $j$.
A \emph{graph signal} is a function defined on the set of nodes, represented as a vector $\bbx \in \mathbb{R}^N$, where $x_i$ denotes the signal value at node $i$.

Graph filters are linear, topology-aware operators that process graph signals.
They can be expressed as matrix polynomials of the adjacency matrix $\bbA$~\cite{rey2023robust,isufi2024graph}, namely
\begin{eqnarray}\label{eq:graph_filter}
     &\bbH = \sum_{j=0}^{N-1} h_j \bbA^j,& 
\end{eqnarray}
where $\bbh = [h_0, \dots, h_{N-1}]^\top$ is the vector of filter coefficients. 
Since each power $\bbA^j$ encodes information about the $j$-hop neighborhood of $\ccalG$, the output $\bby = \bbH \bbx$ can be interpreted as a diffusion (or aggregation) of the input signal $\bbx$ across neighborhoods of increasing size, with the coefficients $h_j$ weighting the contribution from each $j$-hop component~\cite{segarra2017optimal}.

\section{Unrolling DP via GSP}


Algorithm unrolling is a foundational technique for infusing model-based inductive bias into data-driven learning~\cite{monga2021algorithm}.
Given an iterative algorithm, unrolling builds a parametric mapping, typically a neural network, by assigning each iteration to a corresponding block, such as a network layer.
The operations of the original algorithm are preserved and reinterpreted as layer-wise computations, enabling the model to learn algorithm-specific behavior from data.
Next, we unroll policy iteration and draw GSP connections in the process.  Each unrolled block 
consists of two main steps: policy evaluation, which involves solving~\eqref{eq:bellman_equation}, and policy improvement, where \eqref{eq::policy_improvement} is applied.\vspace{2pt}

\noindent \textbf{Policy evaluation.} 
The BEQ \eqref{eq:bellman_equation} is a linear system of equations. However, solving it directly is often impractical due to the large size of state-action spaces. Instead, one typically iterates by applying the right-hand-side (rhs) of \eqref{eq:bellman_equation} repeatedly until convergence is reached. Additional simplifications exploit structural properties of the MDP, such as linear dynamics \cite{lagoudakis2003least, geramifard2013tutorial}, low-rank structure \cite{agarwal2020flambe, rozada2024tensor, rozada2025solving}, or kernel-based representations \cite{ormoneit2002kernel, akiyama2025nonparametric}. Here, instead, we propose leveraging the graph structure of the MDP. 
To elucidate this connection, we expand the BEQ recursion as follows
%
\begin{align}
    \notag
    \bbq^{(k+1)} &= \bbr + \gamma \bbP_\pi \bbq^{(k)} = \bbr + \gamma \bbP_\pi \bbr + \gamma^2 \left(\bbP_\pi\right)^2 \bbq^{(k - 1)}  \\
    &=\hdots =\textstyle \sum_{j=0}^{k-1} \gamma^j \left(\bbP_\pi \right)^j \bbr + \gamma^k \left(\bbP_\pi\right)^k \bbq^{(0)}.   \label{eq::q_graph_filter}
\end{align}
This expression has two terms: an exponentially decaying bias $\bbb^{(k)} := \gamma^k \left(\bbP_\pi\right)^k \bbq^{(0)}$ that depends on the initial value $\bbq^{(0)}$; 
and a \emph{graph filter} $\bbH^{(k)} := \sum_{j=0}^{k-1} \gamma^j \left(\bbP_\pi \right)^j$ applied to $\bbr$. The latter characterization follows since $\bbH^{(k)}$ is a polynomial of $\bbP_\pi$, which represents the adjacency matrix of a weighted digraph $\ccalG$. The nodes are state-action pairs while the edge weights correspond to the Markovian transition probabilities $\bbP$ and the current policy $\bbPi$.
From this viewpoint, the powers of the discount factor $\gamma$ act as the filter coefficients in \eqref{eq:graph_filter}, i.e., $h_j=\gamma^j$. Consequently, policy evaluation can be interpreted as applying a graph filter to the reward signal. Due to the fixed-point theorem \cite{bertsekas2012dynamic}, 
an infinite-order filter is guaranteed to recover the true VF for policy $\pi$, so that $\bbq_\pi = \bbH^{(\infty)} \bbr + \bbb^{(\infty)} = \sum_{j=0}^\infty \gamma^j \left( \bbP_\pi \right)^j \bbr$.

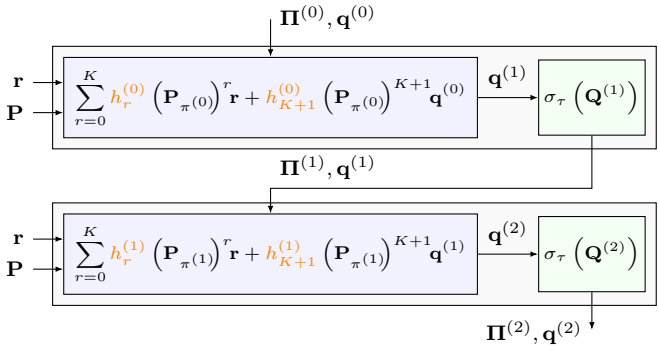
\begin{figure}[t]
    \centering
    \begin{tikzpicture}[
    box/.style={
        draw=black!100,
        inner sep=6pt,
        minimum height=1.2cm,
        align=center
    },
    linbox/.style={
        box,
        font=\scriptsize,
        inner sep=4pt,
        fill=blue!5,
        minimum height=1cm,
        minimum width=0.5cm
    },
    softbox/.style={
        box,
        font=\scriptsize,
        inner sep=2pt,
        fill=green!5,
        minimum height=1cm,
        minimum width=0.5cm
    },
    blockbox/.style={
        draw,
        inner sep=4pt,
        fill=gray!5,
        minimum height=1cm,
        minimum width=0.5cm
    },
    smalltip/.style={->, >={Latex[scale=0.6]}},
    every node/.style={font=\footnotesize},
    >={Stealth[length=4pt, width=6pt]},
    node distance=1.0cm and 0.8cm
]

\node[linbox] (poly0) {$ \displaystyle \sum_{r=0}^K {\color{orange}h_r^{(0)}} \left(\bbP_{\pi^{(0)}} \!\! \right)^r \!\! \bbr + {\color{orange}h_{K+1}^{(0)}} \left(\bbP_{\pi^{(0)}}\!\!\right)^{K+1} \!\! \bbq^{(0) }$};
\node[softbox, right=of poly0] (softmax0) {$\sigma_\tau\left(\bbQ^{(1)}\right)$};
\begin{pgfonlayer}{background}
\node[blockbox, fit=(poly0)(softmax0)] {};
\end{pgfonlayer}

\node[linbox, below=of poly0] (polyk) {$ \displaystyle \sum_{r=0}^K {\color{orange}h_r^{(1)}}\left(\bbP_{\pi^{(1)}} \!\! \right)^r \!\! \bbr + {\color{orange}h_{K+1}^{(1)}} \left(\bbP_{\pi^{(1)}}\!\!\right)^{K+1} \!\! \bbq^{(1) }$};
\node[softbox, right=of polyk] (softmaxk) {$\sigma_\tau\left(\bbQ^{(2)} \right)$};
\begin{pgfonlayer}{background}
\node[blockbox, fit=(polyk)(softmaxk)] {};
\end{pgfonlayer}

\node[left=0.4cm of poly0, yshift=0.2cm] (r1) {$\bbr$};
\node[left=0.4cm of poly0, yshift=-0.2cm] (P1) {$\bbP$};

\draw[smalltip] (r1.east) -- ++(0.3,0) |- ([yshift=0.2cm]poly0.west);
\draw[smalltip] (P1.east) -- ++(0.3,0) |- ([yshift=-0.2cm]poly0.west);

\node[left=0.4cm of polyk, yshift=0.2cm] (r2) {$\bbr$};
\node[left=0.4cm of polyk, yshift=-0.2cm] (P2) {$\bbP$};

\draw[smalltip] (r2.east) -- ++(0.3,0) |- ([yshift=0.2cm]polyk.west);
\draw[smalltip] (P2.east) -- ++(0.3,0) |- ([yshift=-0.2cm]polyk.west);

\draw[smalltip] 
    ($(poly0.north)+(0,0.5)$) -- ($(poly0.north)+(0,0)$)
    node[midway, above right] {$\bbPi^{(0)}, \bbq^{(0)}$};

\draw[smalltip] (poly0) -- node[above] {$\bbq^{(1)}$} (softmax0);

\draw[smalltip] 
    ($(softmax0.south)+(0,0)$) 
    -- ++(0,-0.7) 
    -| ($(polyk.north)+(0,0)$)
    node[midway, above right] {$\bbPi^{(1)}, \bbq^{(1)}$};
    
\draw[smalltip] (polyk) -- node[above] {$\bbq^{(2)}$} (softmaxk);

\draw[smalltip] 
    ($(softmaxk.south)+(0,0)$) -- ($(softmaxk.south)+(0,-0.5)$)
    node[midway, below left] {$\bbPi^{(2)}, \bbq^{(2)}$};

\end{tikzpicture}
    \vspace{-0.5cm}
    \caption{BellNet schematic. A cascade of learnable graph filters and row-wise softmax nonlinearities that unrolls policy iteration.}
    \label{fig:unrolling}
    \vspace{-0.5cm}
\end{figure}

\vspace{1mm}
Moreover, our GSP perspective enables concrete simplifications of the proposed model. 
While the graph filter underlying policy evaluation is, in principle, of infinite degree,  an equivalent filter with limited degree exists. 

\begin{proposition}
\label{prop::finite_filter}
    The value function $\bbq^\pi$ under a fixed policy $\pi$ can be expressed as a finite-order graph filter
    \begin{equation}
        \label{eq::graph_filter_bounded}
       \textstyle \bbq_\pi = \sum_{j=0}^\infty \gamma^j (\bbP_\pi)^j \bbr = \sum_{j=0}^K \bar{h}_j (\bbP_\pi)^j \bbr,
    \end{equation}
    with $K \leq |\ccalS||\ccalA|$. If $\bbP_\pi$ is diagonalizable, then $K \leq |\ccalS|$.
\end{proposition}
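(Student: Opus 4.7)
The plan is to recognize that the infinite sum $\sum_{j=0}^\infty \gamma^j (\bbP_\pi)^j$ is nothing but the Neumann series representation of $(\bbI - \gamma \bbP_\pi)^{-1}$, and then use minimal polynomial / Cayley--Hamilton theory to compress this analytic function of $\bbP_\pi$ into a polynomial of bounded degree. First, I would verify convergence: each row of $\bbP$ is a distribution over next states and each row of $(\bbI\odot\bbPi^\top)^\top$ encodes a probability vector over actions, so $\bbP_\pi$ is row-stochastic and its spectral radius is at most $1$. Combined with $\gamma\in[0,1)$ this gives $\rho(\gamma\bbP_\pi)<1$, so the series converges and equals $(\bbI-\gamma\bbP_\pi)^{-1}$.

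Next, let $m(x)$ denote the minimal polynomial of $\bbP_\pi$ and set $d=\deg(m)$. Since $m(\bbP_\pi)=\bbzero$, every power $(\bbP_\pi)^j$ with $j\geq d$ can be rewritten as a linear combination of $\{\bbI,\bbP_\pi,\ldots,(\bbP_\pi)^{d-1}\}$. Consequently, any analytic function of $\bbP_\pi$ whose series converges, in particular $(\bbI-\gamma\bbP_\pi)^{-1}$, admits a polynomial representation of degree at most $d-1$ in $\bbP_\pi$. Multiplying by $\bbr$ yields the desired finite-order graph filter in \eqref{eq::graph_filter_bounded}, and the general bound $K\leq |\ccalS||\ccalA|$ follows from $d\leq |\ccalS||\ccalA|$, i.e., the dimension of $\bbP_\pi$.

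For the diagonalizable case, the refinement hinges on a rank argument. Because $\bbP_\pi = \bbP(\bbI\odot\bbPi^\top)^\top$ factors through the matrix $(\bbI\odot\bbPi^\top)^\top$, which has only $|\ccalS|$ rows, we have $\rank(\bbP_\pi)\leq |\ccalS|$. Hence $\bbzero$ is an eigenvalue of $\bbP_\pi$ with geometric multiplicity at least $|\ccalS||\ccalA|-|\ccalS|$. When $\bbP_\pi$ is diagonalizable, geometric and algebraic multiplicities coincide, so the nonzero eigenvalues account for algebraic multiplicity at most $|\ccalS|$; this yields at most $|\ccalS|+1$ distinct eigenvalues overall. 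Since for diagonalizable matrices $d$ equals the number of distinct eigenvalues, $d\leq |\ccalS|+1$ and therefore $K\leq |\ccalS|$. The main subtlety I anticipate is precisely this last step: one has to link a rank bound (a geometric quantity) to the degree of the minimal polynomial (an algebraic quantity), which only goes through cleanly under the diagonalizability hypothesis that collapses geometric and algebraic multiplicities.
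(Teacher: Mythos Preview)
Your proof is correct and follows essentially the same route as the paper's: Cayley--Hamilton for the general bound, and a rank argument combined with diagonalizability for the sharper bound. Your treatment is in fact more careful than the paper's terse version---you explicitly verify convergence of the Neumann series, and in the diagonalizable case you correctly count at most $|\ccalS|+1$ distinct eigenvalues (the $\leq|\ccalS|$ nonzero ones plus zero), giving minimal-polynomial degree $d\leq|\ccalS|+1$ and hence $K=d-1\leq|\ccalS|$; the paper simply asserts $d\leq\rank(\bbP_\pi)$, which is off by one in general but lands on the same final bound.
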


\begin{proof}
By the Cayley–Hamilton theorem~\cite{horn2012matrix}, any matrix polynomial of $\bbP_\pi\in [0,1]^{|\ccalS||\ccalA| \times |\ccalS||\ccalA|}$ can be reparameterized as a polynomial of degree at most $K = |\ccalS||\ccalA|$. 
If $\bbP_\pi$ is diagonalizable, the degree of its minimal polynomial is at most $\operatorname{rank}(\bbP_\pi) = \operatorname{rank}(\bbP) = |\ccalS|$, so any polynomial of $\bbP_\pi$ can be expressed with order at most $K = |\ccalS|$.
\end{proof}


Beyond exact policy evaluation, our approach also encompasses \emph{approximate} policy evaluation via early stopping 
after a fixed number of iterations. Recall that value iteration corresponds to a \emph{single} application of the rhs of \eqref{eq:bellman_equation}. In any case, early stopping is equivalent to a graph filter of some order $K$ and fixed coefficients $h_j = \gamma^j$. This also introduces a non-vanishing bias term that must be accounted for [cf. \eqref{eq::q_graph_filter}]. Furthermore, the estimate $\hat \bbq_\pi$ may not converge to the true VF $\bbq_\pi$, so it must be reused to initialize the next policy evaluation under the updated policy $\bbPi’$.
Identity \eqref{eq::graph_filter_bounded} can be extended to this case by explicitly incorporating the bias term as
%
\begin{equation}
    \label{eq::graph_filter_bellnet}
   \textstyle \hat \bbq_\pi = \sum_{j=0}^{K} h_j (\bbP_\pi)^j \bbr + h_{K+1} \left( \bbP_\pi \right)^{K+1} \bbq^{(0)}.
\end{equation}

\noindent \textbf{Policy improvement.} As defined in \eqref{eq::policy_improvement}, policy improvement is a nonlinear row-wise max operation applied to $\bbQ_\pi$, analogous to max-pooling, selecting the maximum in each row. For differentiability, we replace the max operation with a softmax, as detailed in the next section.

\section{BellNet: Learning Policy Iteration}

Through the GSP lens, policy iteration is a cascade of nonlinear graph filtering operations that converge to the optimal VFs of the MDP.
This perspective motivates BellNet, our proposed unrolling of policy iteration to solve BEQs. 
BellNet is a deep architecture composed of $L+1$ layers. Each layer takes as input a VF vector $\bbq\in\reals^{|\ccalS||\ccalA|}$ and its associated softmax policy $\bbPi \in [0,1]^{|\ccalS| \times |\ccalA|}$, and outputs an enhanced VF vector and policy. The mapping between the input and output of the $l$-th layer is realized by a graph filter with learnable coefficients $\bbh^{(l)}$. Formally, let $\bar{\bbq}$ be the (possibly random) initial estimate of the VF, and let $\ccalH=\{\bbh^{(l)}\}_{l=0}^L$ collect all the learnable coefficients.
Then, BellNet, represented by the mapping $\bbPhi(\cdot; \ccalH)$, implements $\{\hat \bbq, \hat \bbPi \}:=\bbPhi(\bar{\bbq}; \ccalH)$ with $\hat \bbq\!=\!\bbq^{(L+1)}$, $\hat \bbPi\!=\!\bbPi^{(L+1)}$, $\bbq^{(0)}=\bar{\bbq}$, and layer-wise updates:
%
\begin{align}
   \notag
   \textstyle &\bbq^{(l + 1)} =\textstyle \sum_{j=0}^K h_j^{(l)} \left(\bbP_{\pi^{(l)}} \right)^j \bbr + h_{K+1}^{(l)} \left(\bbP_{\pi^{(l)}} \right)^{K+1} \bbq^{(l)} \\
    \notag
    &\bbPi^{(l+1)} = \sigma_\tau(\bbQ^{(l+1)}),\;\mathrm{with}\;[\sigma_\tau(\bbQ)]_{i j} = \frac{e^{Q_{i j} / \tau}}{\sum_{k=1}^{|\ccalA|} e^{Q_{i k} / \tau}},
\end{align}
for $l=0,\dots,L$, where $\bbQ^{(l)}=\text{unvec}(\bbq^{(l)})$, 
$\sigma_\tau$ is a row-wise softmax operator with temperature parameter $\tau$, and $\bbh^{(l)}=[h_0^{(l)},\dots,h_{K+1}^{(l)}]$ are the filter coefficients of the $l$-th layer. Each layer implements two 
reduced-order, parallel graph filters, sums their respective outputs, and then applies a softmax nonlinearity.
The BellNet model is illustrated in Fig. \ref{fig:unrolling}. Notably, setting $L=\infty$ and $K = \infty$ with $h^{(l)}_j = \gamma^j$ and replacing the softmax with the max operator recovers policy iteration. Similarly, setting $K = 1$ yields value iteration.\vspace{2pt}

\noindent \textbf{Learning.} To complete the approach, we formulate the optimization adopted to learn the filter coefficients $\ccalH$. 
The loss function is inspired by temporal difference (TD) methods \cite{sutton2018reinforcement, geist2013algorithmic}. We solve a sequential optimization problem that minimizes the Bellman error \cite{choi2006generalized, asadi2023td}, which is the discrepancy between the left and rhs of the optimal BEQ defined in \eqref{eq::bellman_optimality}. 
Specifically, with $n$ being an iteration index, we solve
\begin{equation}
    \label{eq::opt_problem}
    \ccalH_{[n+1]} = \text{argmin}_\ccalH \; \| \bbr + \gamma \bbP_{\bbPi_{[n]}} \bbq_{[n]} - \bbPhi(\bar{\bbq}, \ccalH)\|_2^2,
\end{equation}
where $\{\bbq{[n]},\bbPi_{[n]}\} := \bbPhi(\bar{\bbq}, \ccalH_{[n]})$. Note that $\{\bbq_{[n]},\bbPi_{[n]}\}$ depends on the current iterate $\ccalH_{[n]}$ and not on the optimized coefficients $\ccalH$. By slight abuse of notation, $\bbPhi$ in \eqref{eq::opt_problem} refers only to the VF output $\hat{\bbq}$. We also highlight that: (a) as customary in TD, for each $n$ we update the filter coefficients via gradient descent; (b) our DP method does not require data samples, but  
the transition probability matrix $\bbP$ instead; and (c) BellNet is initialized with an arbitrary VF $\bar{\bbq}$ and trained to converge to the optimal VF and policy regardless of $\bar{\bbq}$.\vspace{2pt}

\noindent \textbf{Transferability.}
Graph filters are permutation-equivariant and transferable to larger graphs from a convergent sequence~\cite{ruiz2021graph}, making them particularly well suited to generalize across related problems. In our DP context, this property can be leveraged to train BellNet on a single MDP and deploy it on other similar or larger MDPs. Doing so yields solutions faster than evaluating policies from scratch, as we demonstrate numerically in Section \ref{sec:numerical}.
Moreover, the vanilla BellNet described so far operates with a fixed unrolling depth and distinct parameters per block. An attractive alternative is to \emph{share} weights across blocks. Although weight sharing admittedly reduces expressiveness, it markedly decreases the number of learnable parameters~\cite{nowlan1992simplifying, chen2021graph}. Crucially, it allows the same block to be reused as many times as desired during inference--exceeding the original training depth to enable efficient, scalable transfer as well as to delineate favorable complexity versus policy approximation tradeoffs.

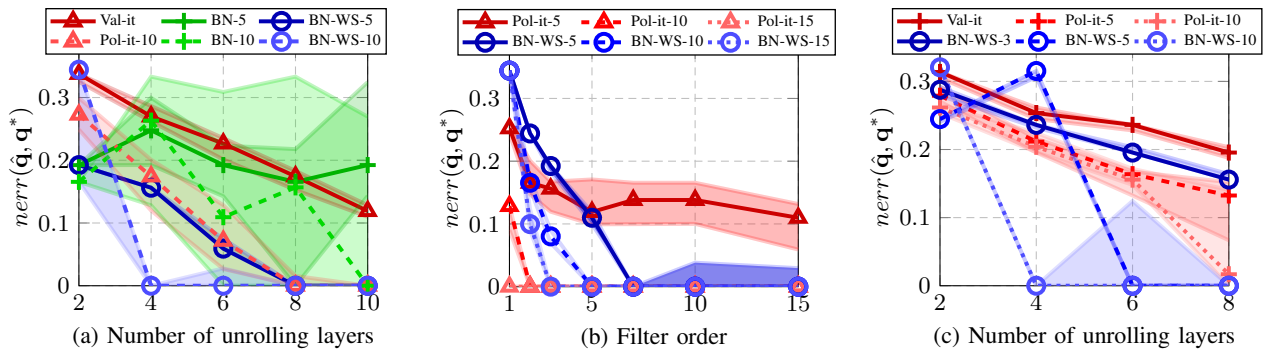
\begin{figure*}[!t]
	\centering
	\begin{subfigure}{0.28\textwidth}
		\centering
		 \begin{tikzpicture}[baseline,scale=.95]

\pgfplotstableread{data/n_unrolls_all_data_med_err.csv}\errtable
\pgfplotstableread{data/n_unrolls_all_data_prctile75.csv}\prcttop
\pgfplotstableread{data/n_unrolls_all_data_prctile25.csv}\prctbot

\pgfmathsetmacro{\opacity}{0.2}
\pgfmathsetmacro{\contourop}{0.25}

\begin{axis}[
    xlabel={(a) Number of unrolling layers},
    xmin=2,
    xmax=10,
    ylabel={$nerr(\hbq, \bbq^*)$},
    ymin = 0,
    ymax = .38,
    grid style=densely dashed,
    grid=both,
    legend style={
        at={(.5, 1.02)},
        anchor=south},
    legend columns=3,
    width=160,
    height=140,
    ]

   \addplot [red!80!black, name path = val-it-bot, opacity=\contourop, forget plot] table [x=xaxis, y=val-it] \prctbot;
    \addplot [red!80!black, name path = val-it-top, opacity=\contourop, forget plot] table [x=xaxis, y=val-it] \prcttop;
    \addplot[red!90!white, fill opacity=\opacity, forget plot] fill between[of=val-it-bot and val-it-top];
    \addplot[red!80!black, mark=triangle, solid] table [x=xaxis, y=val-it] {\errtable};


    \addplot [green!70!black, name path = unr-K5-bot, opacity=\contourop, forget plot] table [x=xaxis, y=unr-K5] \prctbot;
    \addplot [green!70!black, name path = unr-K5-top, opacity=\contourop, forget plot] table [x=xaxis, y=unr-K5] \prcttop;
    \addplot[green!70!black, fill opacity=\opacity, forget plot] fill between[of=unr-K5-bot and unr-K5-top];
    \addplot[green!70!black, mark=+, solid] table [x=xaxis, y=unr-K5] {\errtable};

    \addplot [blue!80!white, name path = unr-K5-WS-bot, opacity=\contourop, forget plot] table [x=xaxis, y=unr-K5-WS] \prctbot;
    \addplot [blue!80!white, name path = unr-K5-WS-top, opacity=\contourop, forget plot] table [x=xaxis, y=unr-K5-WS] \prcttop;
    \addplot[blue!70!black, fill opacity=\opacity, forget plot] fill between[of=unr-K5-WS-bot and unr-K5-WS-top];
    \addplot[blue!70!black, mark=o , solid] table [x=xaxis, y=unr-K5-WS] {\errtable};
    
    \addplot [red!70!white, name path = pol-it-10eval-bot, opacity=\contourop, forget plot] table [x=xaxis, y=pol-it-10eval] \prctbot;
    \addplot [red!70!white, name path = pol-it-10eval-top, opacity=\contourop, forget plot] table [x=xaxis, y=pol-it-10eval] \prcttop;
    \addplot[red!60!white, fill opacity=\opacity, forget plot] fill between[of=pol-it-10eval-bot and pol-it-10eval-top];
    \addplot[red!70!white, mark=triangle, dashed] table [x=xaxis, y=pol-it-10eval] {\errtable}; 

    \addplot [green!85!black, name path = unr-K10-bot, opacity=\contourop, forget plot] table [x=xaxis, y=unr-K10] \prctbot;
    \addplot [green!85!black, name path = unr-K10-top, opacity=\contourop, forget plot] table [x=xaxis, y=unr-K10] \prcttop;
    \addplot[green!95!black, fill opacity=\opacity, forget plot] fill between[of=unr-K10-bot and unr-K10-top];
    \addplot[green!85!black, mark=+, dashed] table [x=xaxis, y=unr-K10] {\errtable};

    \addplot [blue!50!white, name path = unr-K10-WS-bot, opacity=\contourop, forget plot] table [x=xaxis, y=unr-K10-WS] \prctbot;
    \addplot [blue!50!white, name path = unr-K10-WS-top, opacity=\contourop, forget plot] table [x=xaxis, y=unr-K10-WS] \prcttop;
    \addplot[blue!70!white, fill opacity=\opacity, forget plot] fill between[of=unr-K10-WS-bot and unr-K10-WS-top];
    \addplot[blue!70!white, mark=o , dashed] table [x=xaxis, y=unr-K10-WS] {\errtable};


    \legend{Val-it,  BN-5, BN-WS-5, Pol-it-10, BN-10, BN-WS-10}
\end{axis}
\end{tikzpicture}\label{fig:exp_a}
	\end{subfigure}
    \hspace{.4cm}
	\begin{subfigure}{0.28\textwidth}
		\centering
        \begin{tikzpicture}[baseline,scale=.95]

\pgfplotstableread{data/K_pol_vs_ws_all_data_med_err.csv}\errtable
\pgfplotstableread{data/K_pol_vs_ws_unr_all_data_prctile25.csv}\prcttop
\pgfplotstableread{data/K_pol_vs_ws_all_data_prctile75.csv}\prctbot

\pgfmathsetmacro{\opacity}{0.3}
\pgfmathsetmacro{\contourop}{0.25}

\begin{axis}[
    xlabel={(b) Filter order},
    xmin=1,
    xmax=15,
    xtick = {1, 5, 10, 15},
    ylabel={$nerr(\hbq, \bbq^*)$},
    ymin = 0,
    ymax = .38,
    grid style=densely dashed,
    grid=both,
    legend style={
        at={(.5, 1.02)},
        anchor=south},
    legend columns=3,
    width=160,
    height=140,
    ]

    \addplot [red!90!white, name path = pol-it-5eval-bot, opacity=\contourop, forget plot] table [x=xaxis, y=pol-it-5eval] \prctbot;
    \addplot [red!90!white, name path = pol-it-5eval-top, opacity=\contourop, forget plot] table [x=xaxis, y=pol-it-5eval] \prcttop;
    \addplot[red!90!white, fill opacity=\opacity, forget plot] fill between[of=pol-it-5eval-bot and pol-it-5eval-top];
    \addplot[red!80!black, mark=triangle, solid] table [x=xaxis, y=pol-it-5eval] {\errtable};
    
    \addplot [red!70!white, name path = pol-it-10eval-bot, opacity=\contourop, forget plot] table [x=xaxis, y=pol-it-10eval] \prctbot;
    \addplot [red!70!white, name path = pol-it-10eval-top, opacity=\contourop, forget plot] table [x=xaxis, y=pol-it-10eval] \prcttop;
    \addplot[red!70!white, fill opacity=\opacity, forget plot] fill between[of=pol-it-10eval-bot and pol-it-10eval-top];
    \addplot[red, mark=triangle, dashed] table [x=xaxis, y=pol-it-10eval] {\errtable};
    
    \addplot [red!50!white, name path = pol-it-15eval-bot, opacity=\contourop, forget plot] table [x=xaxis, y=pol-it-15eval] \prctbot;
    \addplot [red!50!white, name path = pol-it-15eval-top, opacity=\contourop, forget plot] table [x=xaxis, y=pol-it-15eval] \prcttop;
    \addplot[red!50!white, fill opacity=\opacity, forget plot] fill between[of=pol-it-15eval-bot and pol-it-15eval-top];
    \addplot[red!60!white, mark=triangle, dotted] table [x=xaxis, y=pol-it-15eval] {\errtable};

    \addplot [blue!80!white, name path = unr-5unrolls-WS-bot, opacity=\contourop, forget plot] table [x=xaxis, y=unr-5unrolls-WS] \prctbot;
    \addplot [blue!80!white, name path = unr-5unrolls-WS-top, opacity=\contourop, forget plot] table [x=xaxis, y=unr-5unrolls-WS] \prcttop;
    \addplot[blue!70!black, fill opacity=\opacity, forget plot] fill between[of=unr-5unrolls-WS-bot and unr-5unrolls-WS-top];
    \addplot[blue!70!black, mark=o , solid] table [x=xaxis, y=unr-5unrolls-WS] {\errtable};

    \addplot [blue!65!white, name path = unr-10unrolls-WS-bot, opacity=\contourop, forget plot] table [x=xaxis, y=unr-10unrolls-WS] \prctbot;
    \addplot [blue!65!white, name path = unr-10unrolls-WS-top, opacity=\contourop, forget plot] table [x=xaxis, y=unr-10unrolls-WS] \prcttop;
    \addplot[blue!90!white, fill opacity=\opacity, forget plot] fill between[of=unr-5unrolls-WS-bot and unr-5unrolls-WS-top];
    \addplot[blue, mark=o , dashed] table [x=xaxis, y=unr-10unrolls-WS] {\errtable};

    \addplot [blue!50!white, name path = unr-15unrolls-WS-bot, opacity=\contourop, forget plot] table [x=xaxis, y=unr-15unrolls-WS] \prctbot;
    \addplot [blue!50!white, name path = unr-15unrolls-WS-top, opacity=\contourop, forget plot] table [x=xaxis, y=unr-15unrolls-WS] \prcttop;
    \addplot[blue!70!white, fill opacity=\opacity, forget plot] fill between[of=unr-15unrolls-WS-bot and unr-15unrolls-WS-top];
    \addplot[blue!70!white, mark=o , dotted] table [x=xaxis, y=unr-15unrolls-WS] {\errtable};

    \legend{Pol-it-5, Pol-it-10, Pol-it-15, BN-WS-5, BN-WS-10, BN-WS-15}
\end{axis}
\end{tikzpicture}
	\end{subfigure}
     \hspace{.4cm}
	\begin{subfigure}{0.28\textwidth}
		\centering
		\begin{tikzpicture}[baseline,scale=.95]

\pgfplotstableread{data/transfer_all_data_med_err.csv}\errtable
\pgfplotstableread{data/transfer_all_data_prctile25.csv}\prcttop
\pgfplotstableread{data/transfer_all_data_prctile75.csv}\prctbot

\pgfmathsetmacro{\opacity}{0.2}
\pgfmathsetmacro{\contourop}{0.25}

\begin{axis}[
    xlabel={(c) Number of unrolling layers},
    xmin=2,
    xmax=8,
    ylabel={$nerr(\hbq, \bbq^*)$},
    ymin = 0,
    ymax = .35,
    grid style=densely dashed,
    grid=both,
    legend style={
        at={(.5, 1.02)},
        anchor=south},
    legend columns=3,
    width=160,
    height=140,
    ]

    \addplot [red!90!white, name path = val-it-bot, opacity=\contourop, forget plot] table [x=xaxis, y=val-it] \prctbot;
    \addplot [red!90!white, name path = val-it-top, opacity=\contourop, forget plot] table [x=xaxis, y=val-it] \prcttop;
    \addplot[red!90!white, fill opacity=\opacity, forget plot] fill between[of=val-it-bot and val-it-top];
    \addplot[red!80!black, mark=+, solid] table [x=xaxis, y=val-it] {\errtable};

    \addplot [red!70!white, name path = pol-it-5eval-bot, opacity=\contourop, forget plot] table [x=xaxis, y=pol-it-5eval] \prctbot;
    \addplot [red!70!white, name path = pol-it-5eval-top, opacity=\contourop, forget plot] table [x=xaxis, y=pol-it-5eval] \prcttop;
    \addplot[red!70!white, fill opacity=\opacity, forget plot] fill between[of=pol-it-5eval-bot and pol-it-5eval-top];
    \addplot[red, mark=+, dashed] table [x=xaxis, y=pol-it-5eval] {\errtable};

    \addplot [red!50!white, name path = pol-it-10eval-bot, opacity=\contourop, forget plot] table [x=xaxis, y=pol-it-10eval] \prctbot;
    \addplot [red!50!white, name path = pol-it-10eval-top, opacity=\contourop, forget plot] table [x=xaxis, y=pol-it-10eval] \prcttop;
    \addplot[red!50!white, fill opacity=\opacity, forget plot] fill between[of=pol-it-10eval-bot and pol-it-10eval-top];
    \addplot[red!60!white, mark=+, dotted] table [x=xaxis, y=pol-it-10eval] {\errtable};

    \addplot [blue!80!white, name path = unr-K3-WS-bot, opacity=\contourop, forget plot] table [x=xaxis, y=unr-K3-WS] \prctbot;
    \addplot [blue!80!white, name path = unr-K3-WS-top, opacity=\contourop, forget plot] table [x=xaxis, y=unr-K3-WS] \prcttop;
    \addplot[blue!80!white, fill opacity=\opacity, forget plot] fill between[of=unr-K3-WS-bot and unr-K3-WS-top];
    \addplot[blue!70!black, mark=o , solid] table [x=xaxis, y=unr-K3-WS] {\errtable};

    \addplot [blue!90!white, name path = unr-K5-WS-bot, opacity=\contourop, forget plot] table [x=xaxis, y=unr-K5-WS] \prctbot;
    \addplot [blue!90!white, name path = unr-K5-WS-top, opacity=\contourop, forget plot] table [x=xaxis, y=unr-K5-WS] \prcttop;
    \addplot[blue!65!white, fill opacity=\opacity, forget plot] fill between[of=unr-K5-WS-bot and unr-K5-WS-top];
    \addplot[blue, mark=o , dashed] table [x=xaxis, y=unr-K5-WS] {\errtable};

    \addplot [blue!50!white, name path = unr-K10-WS-bot, opacity=\contourop, forget plot] table [x=xaxis, y=unr-K10-WS] \prctbot;
    \addplot [blue!50!white, name path = unr-K10-WS-top, opacity=\contourop, forget plot] table [x=xaxis, y=unr-K10-WS] \prcttop;
    \addplot[blue!70!white, fill opacity=\opacity, forget plot] fill between[of=unr-K10-WS-bot and unr-K10-WS-top];
    \addplot[blue!70!white, mark=o , dotted] table [x=xaxis, y=unr-K10-WS] {\errtable};

    \legend{Val-it, Pol-it-5, Pol-it-10, BN-WS-3, BN-WS-5, BN-WS-10}
\end{axis}
\end{tikzpicture}
	\end{subfigure}
	\caption{Evaluation of BellNet across different scenarios. We report the median error of the estimated $\hbq$, computed as in~\eqref{eq:nerr}, over 15 realizations. a) Shows the error as $L$ increases; b) illustrates the error as $K$ increases; and c) evaluates the transferability capacity of BellNet.}
    \label{fig:exp}
\end{figure*}

\begin{figure}[t]
    \centering
    \includegraphics[width=.85\linewidth]{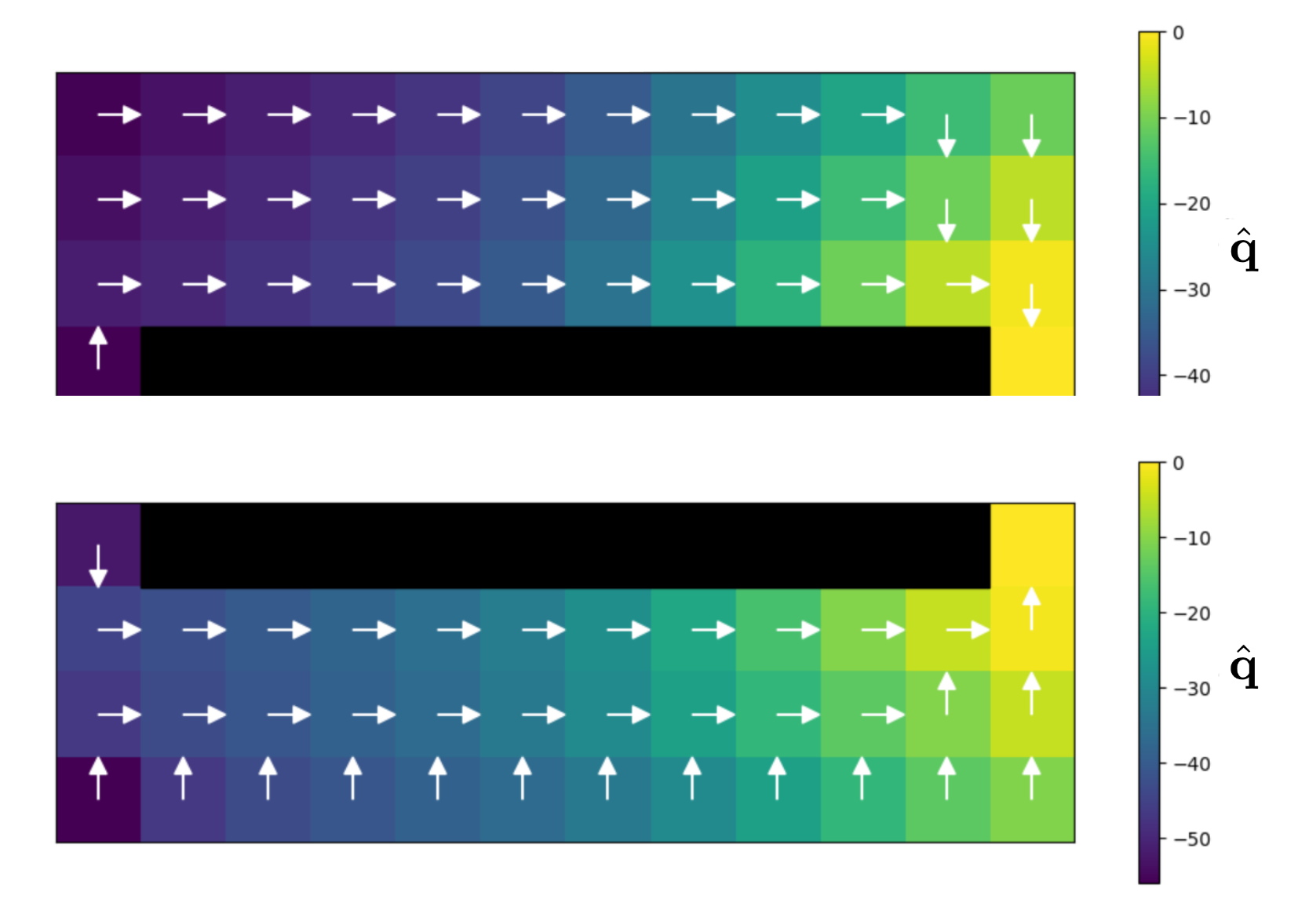}
    \caption{Cliff walking environment (top) and its mirrored version (bottom). 
    Cliff regions are shown in black; arrows indicate the policy learned by BellNet, and the color map represents the corresponding VFs. 
    BellNet is trained on the top environment, while the policy in the bottom environment is inferred without retraining.}
    \label{fig:grid_world}
    \vspace{-0.5cm}
\end{figure}


\section{Numerical Results and Concluding Remarks}\label{sec:numerical}

We assess the performance of BellNet in the cliff walking environment, a grid-world setup where the goal is to reach a target location in the minimum number of steps without falling off the grid.
The state space $\ccalS$ corresponds to positions on the grid, and the action space consists of moving up, down, left, or right.
Two instances of this environment are depicted in Fig.~\ref{fig:grid_world}.
Simulations are conducted using the Gymnasium library \cite{brockman2016openai, towers2024gymnasium}, and the code is available on GitHub\footnote{ \url{https://github.com/sergiorozada12/rl-unrolling}} for reproducibility.

We compute the true VF $\bbq^*$ using policy iteration with sufficiently many policy evaluation and improvement steps, and report the normalized error defined as
\begin{equation}\label{eq:nerr}
    nerr(\hbq, \bbq^*) = \big\| \hbq/\|\hbq\|_2 - \bbq^*/\|\bbq^*\|_2\big\|_2^2.
\end{equation}
%
Figure~\ref{fig:exp} 
depicts the median error along with the interquartile range (between the 25th and 75th percentiles), computed over 15 random realizations.
We compare the performance of BellNet with and without weight sharing (denoted ``BN-WS'' and ``BN'' in the legend), as well as value iteration (``Val-it'') and policy iteration (``Pol-it''), across multiple scenarios.\vspace{2pt}


\noindent
\textbf{Test case 1 (Depth).}
We first examine how increasing the number of unrolling layers (equivalently, the number of policy improvement steps for ``Val-it'' and ``Pol-it'') influences performance.
Figure~\ref{fig:exp}a shows results using filter orders 5 and 10 for ``BN'', and 10 policy evaluation updates in ``Pol-it''.
Apparently, the weight sharing strategy leads to better performance with lower variance, whereas distinct filter coefficients 
results in more unstable behavior.
Moreover, BellNet consistently outperforms policy iteration, recovering the optimal policy with only 4 layers compared to 10 required by ``Pol-it''.\vspace{2pt}

\noindent
\textbf{Test case 2 (Filter order).}
Next, we investigate the role of the filter order in the performance of BellNet. Figure~\ref{fig:exp}b shows the error of ``Pol-it'' and ``BN-WS'' as the number of policy evaluation steps and, correspondingly, the filter order, increases as indicated on the x-axis.
We consider 5, 10, and 15 policy improvement steps for ``Pol-it'' and the same number of unrolling layers for ``BN-WS''.
As expected, 
we find that a higher filter order improves the performance of ``BN-WS'', with a smaller order being sufficient when the number of unrolling layers increases.
Interestingly, this is not the case for ``Pol-it'', where the number of policy improvement steps has a greater impact than the number of evaluation steps in this setting.
Consistent with the previous experiment, these results highlight how BellNet 
outperforms ``Pol-it'' when the number of policy improvement steps is moderately small.\vspace{2pt}

\noindent
\textbf{Test case 3 (Transferability).}
The last experiment inspects BellNet's transferability properties. 
We train BellNet in the original grid-world setting used in previous test cases, and then use it to predict the optimal policy in a modified environment where the positions of the cliffs, origin, and destination have changed.
As shown in Fig.~\ref{fig:exp}c, BellNet successfully predicts the optimal policy in the new environment without requiring retraining.
For comparison, we compute the optimal policy in the modified environment using value iteration and policy iteration with 5 and 10 policy evaluation steps.
We observe that the error in the estimated $\hbq$ decreases as the number of layers (indicated on the x-axis) increases, or, when higher-order filters are used.
Notably, BellNet outperforms the classical baselines when both the number of unrolling layers and the filter order are sufficiently large.
Overall, these preliminary results demonstrate that BellNet not only offers a novel approach to estimating the optimal policy, but also generalizes effectively to other related environments not seen during training.

\newpage

\balance
\bibliographystyle{IEEEbib}
\bibliography{myIEEEabrv,biblio}

\begin{thebibliography}{10}

\bibitem{denardo2012dynamic}
E.~V. Denardo,
\newblock {\em Dynamic Programming: {M}odels and Applications},
\newblock Courier Corporation, 2012.

\bibitem{puterman2014markov}
M.~L. Puterman,
\newblock {\em Markov Decision Processes: {D}iscrete Stochastic Dynamic Programming},
\newblock John Wiley \& Sons, 2014.

\bibitem{bertsekas2012dynamic}
D.~Bertsekas,
\newblock {\em Dynamic Programming and Optimal Control: {V}olume I}, vol.~4,
\newblock Athena Scientific, 2012.

\bibitem{gregor2010learning}
K.~Gregor and Y.~LeCun,
\newblock ``Learning fast approximations of sparse coding,''
\newblock in {\em Int. Conf. Mach. Learning}, 2010, pp. 399--406.

\bibitem{chen2021graph}
S.~Chen, Y.~C. Eldar, and L.~Zhao,
\newblock ``Graph unrolling networks: {I}nterpretable neural networks for graph signal denoising,''
\newblock {\em {IEEE} Trans. Signal Process.}, vol. 69, pp. 3699--3713, 2021.

\bibitem{ortega2018graph}
A.~Ortega, P.~Frossard, J.~Kova{\v{c}}evi{\'c}, J.~M.~F. Moura, and P.~Vandergheynst,
\newblock ``Graph signal processing: {O}verview, challenges, and applications,''
\newblock {\em Proc. {IEEE}}, vol. 106, no. 5, pp. 808--828, 2018.

\bibitem{leus2023graph}
G.~Leus, A.~G. Marques, J.~M.~F. Moura, A.~Ortega, and D.~I. Shuman,
\newblock ``Graph signal processing: {H}istory, development, impact, and outlook,''
\newblock {\em {IEEE} Signal Process. Mag.}, vol. 40, no. 4, pp. 49--60, 2023.

\bibitem{monga2021algorithm}
V.~Monga, Y.~Li, and Y.~C. Eldar,
\newblock ``Algorithm unrolling: {I}nterpretable, efficient deep learning for signal and image processing,''
\newblock {\em {IEEE} Signal Process. Mag.}, vol. 38, no. 2, pp. 18--44, 2021.

\bibitem{hadou2024robust}
S.~Hadou, N.~NaderiAlizadeh, and A.~Ribeiro,
\newblock ``Robust stochastically-descending unrolled networks,''
\newblock {\em {IEEE} Trans. Signal Process.}, vol. 72, pp. 5484--5499, 2024.

\bibitem{isufi2024graph}
E.~Isufi, F.~Gama, D.~I. Shuman, and S.~Segarra,
\newblock ``Graph filters for signal processing and machine learning on graphs,''
\newblock {\em {IEEE} Trans. Signal Process.}, vol. 72, pp. 4745--4781, 2024.

\bibitem{tamar2016value}
A.~Tamar, Y.~Wu, G.~Thomas, S.~Levine, and P.~Abbeel,
\newblock ``Value iteration networks,''
\newblock in {\em Conf. Neural Inform. Process. Syst.}, 2016, vol.~29.

\bibitem{niu2018generalized}
S.~Niu, S.~Chen, H.~Guo, C.~Targonski, M.~Smith, and J.~Kova{\v{c}}evi{\'c},
\newblock ``Generalized value iteration networks: {L}ife beyond lattices,''
\newblock in {\em {AAAI} Conf. Artificial Intell.}, 2018, vol.~32.

\bibitem{deac2020graph}
A.~Deac, P.-L. Bacon, and J.~Tang,
\newblock ``Graph neural induction of value iteration,''
\newblock {\em arXiv preprint arXiv:2009.12604}, 2020.

\bibitem{liu2019solving}
L.~Liu, A.~Chattopadhyay, and U.~Mitra,
\newblock ``On solving {MDP}s with large state space: {E}xploitation of policy structures and spectral properties,''
\newblock {\em {IEEE} Trans. Commun.}, vol. 67, no. 6, pp. 4151--4165, 2019.

\bibitem{liu2019policy}
L.~Liu and U.~Mitra,
\newblock ``Policy sampling and interpolation for wireless networks: {A} graph signal processing approach,''
\newblock in {\em Proc. IEEE Global Commun. Conf.}, 2019, pp. 1--6.

\bibitem{levorato2012reduced}
M.~Levorato, S.~Narang, U.~Mitra, and A.~Ortega,
\newblock ``Reduced dimension policy iteration for wireless network control via multiscale analysis,''
\newblock in {\em Proc. IEEE Global Commun. Conf.}, 2012, pp. 3886--3892.

\bibitem{gama2020stability}
F.~Gama, J.~Bruna, and A.~Ribeiro,
\newblock ``Stability properties of graph neural networks,''
\newblock {\em {IEEE} Trans. Signal Process.}, vol. 68, pp. 5680--5695, 2020.

\bibitem{ruiz2021graph}
L.~Ruiz, F.~Gama, and A.~Ribeiro,
\newblock ``Graph neural networks: {A}rchitectures, stability, and transferability,''
\newblock {\em Proc. {IEEE}}, vol. 109, no. 5, pp. 660--682, 2021.

\bibitem{levie2021transferability}
R.~Levie, W.~Huang, L.~Bucci, M.~Bronstein, and G.~Kutyniok,
\newblock ``Transferability of spectral graph convolutional neural networks,''
\newblock {\em J. Mach. Learning Res.}, vol. 22, no. 272, pp. 1--59, 2021.

\bibitem{cervino2023learning}
J.~Cervino, L.~Ruiz, and A.~Ribeiro,
\newblock ``Learning by transference: {T}raining graph neural networks on growing graphs,''
\newblock {\em {IEEE} Trans. Signal Process.}, vol. 71, pp. 233--247, 2023.

\bibitem{rey2025redesigning}
S.~Rey, M.~Navarro, V.~M. Tenorio, S.~Segarra, and A.~G. Marques,
\newblock ``Redesigning graph filter-based {GNN}s to relax the homophily assumption,''
\newblock in {\em {IEEE} Int. Conf. Acoust., Speech and Signal Process.}, 2025, pp. 1--5.

\bibitem{wang2025manifold}
Z.~Wang, J.~Cervino, and A.~Ribeiro,
\newblock ``A manifold perspective on the statistical generalization of graph neural networks,''
\newblock in {\em Int. Conf. Learn. Representations}, 2025.

\bibitem{bellman1966dynamic}
R.~Bellman,
\newblock ``Dynamic programming,''
\newblock {\em Science}, vol. 153, no. 3731, pp. 34--37, 1966.

\bibitem{rey2023robust}
S.~Rey, V.~M. Tenorio, and A.~G. Marques,
\newblock ``Robust graph filter identification and graph denoising from signal observations,''
\newblock {\em {IEEE} Trans. Signal Process.}, vol. 71, pp. 3651--3666, 2023.

\bibitem{segarra2017optimal}
S.~Segarra, A.~G. Marques, and A.~Ribeiro,
\newblock ``Optimal graph-filter design and applications to distributed linear network operators,''
\newblock {\em {IEEE} Trans. Signal Process.}, vol. 65, no. 15, pp. 4117--4131, 2017.

\bibitem{lagoudakis2003least}
M.~G. Lagoudakis and R.~Parr,
\newblock ``Least-squares policy iteration,''
\newblock {\em J. Mach. Learning Res.}, vol. 4, no. Dec, pp. 1107--1149, 2003.

\bibitem{geramifard2013tutorial}
A.~Geramifard et~al.,
\newblock ``A tutorial on linear function approximators for dynamic programming and reinforcement learning,''
\newblock {\em Foundations and Swerves{\textregistered} in Machine Learning}, vol. 6, no. 4, pp. 375--451, 2013.

\bibitem{agarwal2020flambe}
A.~Agarwal, S.~Kakade, A.~Krishnamurthy, and W.~Sun,
\newblock ``{FLAMBE}: Structural complexity and representation learning of low rank {MDP}s,''
\newblock in {\em Conf. Neural Inform. Process. Syst.}, 2020, vol.~33, pp. 20095--20107.

\bibitem{rozada2024tensor}
S.~Rozada, S.~Paternain, and A.~G. Marques,
\newblock ``Tensor and matrix low-rank value-function approximation in reinforcement learning,''
\newblock {\em {IEEE} Trans. Signal Process.}, vol. 72, pp. 1634--1649, 2024.

\bibitem{rozada2025solving}
S.~Rozada, J.~L. Orejuela, and A.~G. Marques,
\newblock ``Solving finite-horizon {MDP}s via low-rank tensors,''
\newblock {\em arXiv preprint arXiv:2501.10598}, 2025.

\bibitem{ormoneit2002kernel}
D.~Ormoneit and S.~Sen,
\newblock ``Kernel-based reinforcement learning,''
\newblock {\em Mach. Learn.}, vol. 49, no. 2, pp. 161--178, 2002.

\bibitem{akiyama2025nonparametric}
Y.~Akiyama and K.~Slavakis,
\newblock ``Nonparametric {B}ellman mappings for value iteration in distributed reinforcement learning,''
\newblock {\em arXiv preprint arXiv:2503.16192}, 2025.

\bibitem{horn2012matrix}
R.~A. Horn and C.~R. Johnson,
\newblock {\em Matrix Analysis},
\newblock Cambridge University Press, 2012.

\bibitem{sutton2018reinforcement}
R.~S. Sutton,
\newblock {\em Reinforcement Learning: {A}n Introduction},
\newblock A Bradford Book, 2018.

\bibitem{geist2013algorithmic}
M.~Geist and O.~Pietquin,
\newblock ``Algorithmic survey of parametric value function approximation,''
\newblock {\em {IEEE} Trans. Neural Netw. Learning Syst.}, vol. 24, no. 6, pp. 845--867, 2013.

\bibitem{choi2006generalized}
D.~Choi and B.~Van~Roy,
\newblock ``A generalized {K}alman filter for fixed point approximation and efficient temporal-difference learning,''
\newblock {\em Discrete Event Dyn. Syst.}, vol. 16, no. 2, pp. 207--239, 2006.

\bibitem{asadi2023td}
K.~Asadi, S.~Sabach, Y.~Liu, O.~Gottesman, and R.~Fakoor,
\newblock ``{TD} convergence: {A}n optimization perspective,''
\newblock in {\em Conf. Neural Inform. Process. Syst.}, 2023, vol.~36, pp. 49169--49186.

\bibitem{nowlan1992simplifying}
Steven~J. Nowlan and Geoffrey~E. Hinton,
\newblock ``Simplifying neural networks by soft weight-sharing,''
\newblock {\em Neural Comput.}, vol. 4, no. 4, pp. 473--493, 1992.

\bibitem{brockman2016openai}
G.~Brockman et~al.,
\newblock ``Open{AI} {G}ym,''
\newblock {\em arXiv preprint arXiv:1606.01540}, 2016.

\bibitem{towers2024gymnasium}
M.~Towers et~al.,
\newblock ``Gymnasium: {A} standard interface for reinforcement learning environments,''
\newblock {\em arXiv preprint arXiv:2407.17032}, 2024.

\end{thebibliography}

\end{document}